\documentclass[sigconf,preprint]{acmart}
\usepackage{microtype}
\usepackage{graphicx}
\usepackage{amsmath}
\usepackage{booktabs}
\usepackage{enumitem}
\usepackage{caption}
\usepackage{subfig}
\usepackage{epsfig}
\usepackage{comment}
\usepackage{xspace}
\usepackage{amsfonts }
\usepackage{mathtools}
\usepackage{multirow}
\usepackage{hyperref}
\usepackage{algorithm}
\usepackage{algorithmic}
\usepackage{float}
\pagenumbering{arabic}
\newcommand{\proj}{\textsc{TiFL}\xspace}

\begin{document}

\title{{\proj}: A Tier-based Federated Learning System}

\author{Zheng Chai*} 
\email{zchai2@gmu.edu}
\affiliation{
  \institution{George Mason University}
}
\author{ Ahsan Ali* }
\email{aali@nevada.unr.edu}
\affiliation{
  \institution{University of Nevada, Reno}
}

\author{Syed Zawad*}
\email{szawad@nevada.unr.edu}
\affiliation{
  \institution{University of Nevada, Reno}
}
\author{Stacey Truex}
\email{staceytruex@gatech.edu}
\affiliation{
  \institution{Georgia Institute of Technology}
}

\author{ Ali Anwar }
\email{ali.anwar2@ibm.com}
\affiliation{
  \institution{IBM Research - Almaden}
}
\author{Nathalie Baracaldo}
\email{baracald@us.ibm.com}
\affiliation{
  \institution{IBM Research - Almaden}
}
\author{Yi Zhou}
\email{yi.zhou@ibm.com}
\affiliation{
  \institution{IBM Research - Almaden}
}
\author{Heiko Ludwig}
\email{hludwig@us.ibm.com}
\affiliation{
  \institution{IBM Research - Almaden}
}
\author{Feng Yan}
\email{fyan@unr.edu}
\affiliation{
  \institution{University of Nevada, Reno}
}
\author{Yue Cheng}
\email{yuecheng@gmu.edu}
\affiliation{
  \institution{George Mason University}
}

\renewcommand{\shortauthors}{Paper \#147}

\settopmatter{printacmref=false} 
\renewcommand\footnotetextcopyrightpermission[1]{} 
\pagestyle{plain} 

\begin{abstract}
Federated Learning (FL) enables learning a shared model across many clients without violating the privacy requirements. One of the key attributes in FL is the heterogeneity that exists in both resource and data due to the differences in computation and communication capacity, as well as the quantity and content of data among different clients. 
We conduct a case study to show that heterogeneity in resource and data has a significant impact on training time and model accuracy in conventional FL systems. 
To this end, we propose \proj, a Tier-based Federated Learning System, which divides clients into tiers based on their training performance and selects clients from the same tier in each training round to mitigate the straggler problem caused by heterogeneity in resource and data quantity. To further tame the heterogeneity caused by non-IID (Independent and Identical Distribution) data and resources, \proj employs an
\textit{adaptive} tier selection approach to update the tiering on-the-fly based on the observed training performance and accuracy over time.
We prototype \proj in a FL testbed following Google's FL architecture and evaluate it using popular benchmarks and the state-of-the-art FL benchmark LEAF. Experimental evaluation shows that \proj outperforms the conventional FL in various heterogeneous conditions. With the proposed adaptive tier selection policy, we demonstrate that \proj achieves much faster training performance while keeping the same (and in some cases - better) test accuracy across the board.

\end{abstract}

\keywords{federated learning, stragglers, resource heterogeneity, data heterogeneity, edge computing}

\maketitle

\let\thefootnote\relax\footnotetext{* Equal contribution.}

\section{Introduction}
\label{sec:intro}

Modern mobile and IoT devices (such as smart phones, smart wearable devices, smart home devices) are generating massive amount of data every day, which provides opportunities for crafting sophisticated machine learning (ML) models to solve challenging AI tasks~\cite{he2016deep}. 
In conventional high-performance computing (HPC), all the data is collected and centralized in one location and proceed by supercomputers with hundreds to thousands of computing nodes.
However, security and privacy concerns have led to new legislation such as the General Data Protection Regulation (GDPR) ~\cite{tankard2016gdpr} and the Health Insurance Portability and Accountability Act (HIPAA) \cite{o2004health} that prevent transmitting data to a centralized location, thus making conventional high performance computing difficult to be applied for collecting and processing the decentralized data.
Federated Learning (FL) \cite{mcmahan2017learning} shines light on a new emerging high performance computing paradigm by addressing the security and privacy challenges through utilizing decentralized data that is training local models on the local data of each client (data parties) and using a central aggregator to accumulate the learned gradients of local models to train a global model. 
Though the computing resource of individual client may be far less powerful than the computing nodes in conventional supercomputers, the computing power from the massive number of clients can accumulate to form a very powerful ``decentralized virtual supercomputer''.
Federated learning has demonstrated its success in a range of applications.
From consumer-end devices such as GBoard~\cite{hard2018federated, yang2018applied} and keyword spotting~\cite{leroy2019federated} to pharmaceuticals~\cite{EUCordis}, medical research~\cite{courtiol2019deep}, finance~\cite{webank} , and manufacturing~\cite{hao2019efficient}. 
There has also been a rise of FL tools and framework development, such as Tensorflow Federated~\cite{TFL}, LEAF~\cite{caldas2018leaf}, PaddleFL~\cite{paddlefl} and PySyft~\cite{ryffel2018generic} to facilitate these demands.
Depending on the usage scenarios, FL is usually categorized into \emph{cross-silo} FL and \emph{cross-device} FL~\cite{kairouz2019advances}. In cross-device FL, the clients are usually a massive number (e.g., up to $10^{10}$) of mobile or IoT
devices with various computing and communication capacities ~\cite{mcmahan2016communication,kairouz2019advances,konevcny2016federated} while in cross-silo FL, the clients are a small number of organizations with ample computing power and reliable communications~\cite{yang2019federated,kairouz2019advances}.
In this paper, we focus on the cross-device FL (for simplicity, we call it FL in the following), which intrinsically pushes the heterogeneity of computing and communication resources to a level that is rarely found in  datacenter distributed learning and cross-silo FL.
More importantly, the data in FL is also owned by clients where the quantity and content
can be quite different from each other, causing severe heterogeneity in data that usually does not appear in datacenter distributed learning, where data distribution is well controlled.

We first conduct a case study to quantify how data and resource heterogeneity in clients impacts the performance of FL with FedAvg in terms of training performance and model accuracy, and we summarize the key findings below:
(1) training throughput is usually bounded by slow clients (a.k.a. stragglers) with less computational capacity and/or slower communication, which we name as the \textit{resource heterogeneity}. Asynchronous training is often employed to mitigate this problem in datacenter distributed learning, but literature has shown that synchronization is a better approach for secure aggregation~\cite{bonawitz2017practical} and differential privacy \cite{mcmahan2017learning}. Moreover, \textit{FedAvg}~\cite{mcmahan2016communication} has become a common algorithm for FL that uses synchronous approach for training. 
(2) Different clients may train on different quantity of samples per training round and results in different round time that is similar to the straggler effect, which impacts the training time and potentially also the accuracy. We name this observation the \textit{data quantity heterogeneity}.
(3) In datacenter distributed learning, the classes and features of the training data are uniformly distributed among all clients, namely Independent Identical Distribution (IID). However, in FL, the distribution of data classes and features depends on the data owners, thus resulting in a non-uniform data distribution, known as non-Identical Independent Distribution (\textit{non-IID data heterogeneity}). Our experiments show that such heterogeneity can significantly impact the training time and accuracy.

Driven by the above observations, we propose \proj, a Tier-based Federated Learning System. The key idea here is adaptively selecting clients with similar per round training time so that the heterogeneity problem can be mitigated without impacting the model accuracy.
Specifically, we first employ a lightweight profiler to measure the training time of each client and group them into different logical data pools based on the measured latency, called \textit{tiers}.
During each training round, clients are selected uniform randomly from the same tier based on the adaptive client selection algorithm of \proj.
In this way, the heterogeneity problem is mitigated as clients belonging to the same tier have similar training time.
In addition to heterogeneity mitigation, such tiered design and adaptive client selection algorithm also allows controlling the training throughput and accuracy by adjusting the tier selection intelligently, e.g., selecting tiers such that the model accuracy is maintained while prioritizing selection of faster tiers.
We further prove that the tiering method is compatible with privacy-preserving FL.

While \textit{resource heterogeneity} and \textit{data quantity heterogeneity} information can be reflected in the measured training time, the \textit{non-IID data heterogeneity} information is difficult to capture.
This is because any attempt to measure the class and feature distribution violates the privacy-preserving requirements. 
To solve this challenge, \proj offers an \textit{adaptive} client selection algorithm that uses the accuracy as indirect measure to infer the \textit{non-IID data heterogeneity} information and adjust the tiering algorithm on-the-fly to minimize the training time and accuracy impact. Such approach also serves as an online version to be used in an environment where the characteristics of heterogeneity change over time.

We prototype \proj in a FL testbed that follows the architecture design of Google's FL system~\cite{bonawitz2019towards} and perform extensive experimental evaluation to verify its effectiveness and robustness using both the popular ML benchmarks and state-of-the-art FL benchmark LEAF~\cite{caldas2018leaf}.
The experimental results show that in the \textit{resource heterogeneity} case, \proj can improve the training time by a magnitude of 6$\times$ without affecting the accuracy. In the \textit{data quantity heterogeneity} case, a 3$\times$ speedup is observed in training time with comparable accuracy to the conventional FL.
Overall, \proj outperforms the conventional FL with legal parameter number in definition of 3$\times$ improvement in training time and 8\% improvement in accuracy in CIFAR10~\cite{krizhevsky2014cifar} and 3$\times$ improvement in training time using FEMINIST\cite{caldas2018leaf}  under LEAF.
\section{Related Work}
\label{sec:related}
 \textbf{Federated Learning.}
 The most recent research efforts in FL have been focusing on the functionality~\cite{konevcny2016federated}, scalability~\cite{bonawitz2019towards}, privacy~\cite{mcmahan2017learning, bonawitz2017practical}, and tackling heterogeneity~\cite{ghosh2019robust,li2018federated}.
 Existing FL approaches \cite{konevcny2016federated,mcmahan2016communication,caldas2018expanding} do not account for the resource and data heterogeneity,
 mainly focusing on weight and model compression to reduce communication overhead. They are not straggler-aware even though there can be significant latency issues with stragglers. In synchronous FL, a fixed number of clients are queried in each learning epoch to ensure performance and data privacy. Recent synchronous FL algorithms focus on reducing the total training time without considering the straggler clients. For example, \cite{mcmahan2016communication} proposes to reduce network communication costs by performing multiple SGD (stochastic gradient descent) updates locally and batching clients. \cite{konevcny2016federated} reduces communication bandwidth consumption by structured and sketched updates. 
FedCS~\cite{nishio2019client} proposes to solve client selection issue via a deadline-based approach that filters out slowly-responding clients. However, FedCS does not consider how this approach effects the contributing factors of straggler clients in model training. Similarly, \cite{wang2019adaptive} proposes an algorithm for running FL on resource-constrained devices. However, they do not aim to handle straggler clients and treat all clients as resource-constrained. In contrast, we focus on scenarios where resource-constrained devices are paired with more powerful devices to perform FL. 

Most asynchronous FL algorithms do not take into consideration the effects of client drop-outs. For instance, \cite{smith2017federated} provides performance guarantee only for convex loss functions with bounded delay assumption.
Furthermore, the comparison of synchronous and asynchronous methods of distributed gradient descent \cite{bonawitz2019towards} suggest that FL should use the synchronous approach \cite{mcmahan2016communication, bonawitz2017practical}, as it is more secure than the asynchronous approaches.

In the interest of space, for further background information, we recommend readers to read these papers \cite{li2019federated,kairouz2019advances}, where an in-depth discussion is offered for the current challenges and state-of-the-art systems in Federated Learning. 

\textbf{Stragglers in Datacenter Distributed Learning.} Similar to datacenter distributed learning systems, the straggler problem also exists in FL and it is greatly pronounced in the synchronous learning setting. The significantly higher heterogeneity levels in FL makes it more challenging.
\cite{bonawitz2019towards} proposes a simple approach to handle stragglers problem in FL,
where the aggregator selects 130\% of the target number of devices to initially participate, and discards stragglers during training process. With this method, the aggregator can get 30\% tolerance for the stragglers by ignoring the updates from the slower edge devices. However, the 30\% is set arbitrarily which requires further tuning. Furthermore simply dropping the slower clients might exclude certain data distributions available on the slower clients from contributing towards training the global model. FedProx~\cite{li2018federated} also tackles resource and data heterogeneity by making improvements on the FedAvg algorithm. However, they also discard training data to make up for the systems heterogeneity.

\cite{li2018federated} takes into account the resource heterogeneity. However, the proposed approach is mainly focused on only two types of clients - stragglers and non-stragglers. In a real FL environment there is a wide range of heterogeneity levels. The proposed approach performs well in case of high ration of stragglers vs non-stragglers (80-90\%). Moreover, their proposed solution involves partial training on stragglers which can further lead to biasness in trained model and sub-optimal model accuracy as explained in Section \ref{resoure_plus_data}.
\cite{ho2013more} proposes that adding local cache is an efficient and reliable technique to deal with stragglers in datacenter distributed Learning. However, FL is powerless in governing the resources on client side, so it's impractical to implement similar mechanisms in FL.

\cite{ghosh2019robust} proposes a general statistical model for Byzantine machines and clients with data heterogeneity that clusters based on data distribution. While data distribution may cause stragglers, \cite{ghosh2019robust} focuses on grouping edge devices such that their datasets are similar. The authors do not consider the impact of clustering on training time or accuracy.
\cite{harlap2016addressing} propose a novel design named RapidReassignment to handle straggles by specializing work shedding. It uses P2P communication among workers to detect slowed workers, performs work re-assignment, and exploits iteration knowledge to further reduce how much data needs to be preloaded on helpers. However, as stated in Section~\ref{sec:intro}, migrating a user's private data to other unknown users' devices is strictly restricted in FL.
An analogical approach named SpecSync is proposed in \cite{zhang2018stay}, where each worker speculates about the parameter updates from others, and if necessary, it aborts the ongoing computation, pulls fresher parameters to start over, so as to opportunistically improve the training quality. However, information sharing between clients is not allowed in FL.

\section{Heterogeneity Impact Study}
\label{sec:moti}

Compared with datacenter distributed learning and cross-silo FL, one of the key features of cross-device FL is the significant resource and data heterogeneity among clients, which can potentially 
impact both the training throughput and the model accuracy. Resource heterogeneity arises as a result of vast number of computational devices with varying computational and communication capabilities involved in the training process. The data heterogeneity arises as a result of two main reasons - (1) the varying number of training data samples available at each client and (2) the non-uniform distribution of classes and features among the clients.

\subsection{Formulating Vanilla Federated Learning}
\label{subsec:traditional_fl}

Cross-device FL is performed as an iterative process whereby the model is trained over a series of global training rounds, and the trained model is shared by all the involved clients. 
We define $K$ as the total pool of clients available to select from for each global training round, 
and $C$ as the set of clients selected per round. 
In every global training round, the aggregator selects a random fraction of clients $C_r$ from $K$.
The vanilla cross-device FL algorithm is briefly summarized in Alg.~\ref{alg:fedAvg}.
The aggregator first randomly initializes weights of the global model denoted by $\omega_{0}$. At the beginning of each round, the aggregator sends the current model weights to a subset of randomly selected clients. Each selected client then trains its local model with its local data
and sends back the updated weights to the aggregator after local training.
At each round, the aggregator waits until all selected clients respond with their corresponding trained weights. 
This iterative process keeps on updating the global model until a certain number of rounds are completed or a desired accuracy is reached.

The state-of-the-art cross-device FL system proposed in \cite{bonawitz2017practical} adopts a client selection policy where clients are selected randomly. 
A coordinator is responsible for creating and deploying a master aggregator and multiple child aggregators for achieving scalability as the real world cross-device FL system can involve up to $10^{10}$ clients~\cite{bonawitz2017practical, li2019federated, kairouz2019advances}. 
 
At each round, the master aggregator collects the weights from all the child aggreagtors to update the global model.

\subsection{Heterogeneity Impact Analysis}

The resource and data heterogeneity among involved clients may lead to varying response latencies (i.e., the time between a client receives the training task and returns the results) in the cross-device FL process, which is usually referred as the straggler problem.

We denote the response latency of a client $c_{i}$ as $L_{i}$, and the latency of a global training round is defined as
  \begin{equation}
  \scalebox{1.0}{
      $\label{latency}
      L_{r} = Max \Big(L_{1},L_{2},L_{3},L_{4}...L_{\mid C \mid}\Big)$}  .
  \end{equation}
where $L_{r}$ is the latency of round $r$. 
From Equation~\eqref{latency}, we can see the latency of a global training round is bounded by the maximum training latency of clients in $C$, i.e., the slowest client. 

We define $\tau$ levels of clients, i.e., within the same level, the clients have similar response latencies. 
Assume that the total number of levels is $m$ and  $\tau_{m}$ is the slowest level with $|\tau_{m}|$ clients inside. 
In the baseline case  (Alg.~\ref{alg:fedAvg}), the aggregator selects the clients randomly, resulting in a group of selected clients with composition spanning multiple client levels. 

We formulate the probability of selecting $|C|$ clients from all client levels except the slowest level $\tau_m$ as follows:
\begin{equation}
    Pr = \frac{\binom{|K|-|\tau_{m}|}{|C|}}{\binom{|K|}{|C|}} .
\end{equation} 
Accordingly, the probability of at least one client in $C$ comes from $\tau_m$ can be formulated as:
\begin{equation}
    Pr_s = 1 - Pr .
\end{equation}

\begin{theorem} 
\label{theo:first}
$ \frac{a-1}{b-1} < \frac{a}{b},\;while\; 1 < a < b $ 
\end{theorem}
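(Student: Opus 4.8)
The plan is to reduce the claimed inequality to the hypothesis $a < b$ by clearing denominators. First I would check that both denominators are strictly positive: since $1 < a < b$, we have $b > 1$, hence $b - 1 > 0$, and also $b > 0$. Because multiplying an inequality by a positive quantity preserves its direction, the inequality $\frac{a-1}{b-1} < \frac{a}{b}$ is equivalent to the one obtained after multiplying both sides by the positive number $b(b-1)$, namely $b(a-1) < a(b-1)$.

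Next I would expand: $b(a-1) = ab - b$ and $a(b-1) = ab - a$. Cancelling $ab$ from both sides, the claim becomes $-b < -a$, i.e. $a < b$, which holds by assumption. Since each step is an equivalence (every multiplication is by a strictly positive factor), this chain establishes the theorem. Alternatively, and perhaps more cleanly, one can compute the difference directly: $\frac{a}{b} - \frac{a-1}{b-1} = \frac{a(b-1) - b(a-1)}{b(b-1)} = \frac{b-a}{b(b-1)}$, and then observe that the numerator $b - a$ is positive by hypothesis and the denominator $b(b-1)$ is positive since $b > 1$, so the difference is positive.

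The only point needing any care — and it is minor — is justifying $b - 1 > 0$ before cross-multiplying; this is where the assumption $a > 1$ (together with $a < b$) actually gets used, rather than merely $a < b$ alone. Were $b - 1$ negative, the cross-multiplication would reverse the inequality and the statement would fail. Beyond that there is no real obstacle: the claim is a one-line algebraic manipulation, and I expect the author's proof to amount to exactly one of the two computations above.
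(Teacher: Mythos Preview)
Your proof is correct and is essentially the same as the paper's: both reduce the inequality to $(a-1)b < (b-1)a$, expand to $ab - b < ab - a$, and conclude from $a < b$. If anything, you are slightly more careful in explicitly noting that $b-1 > 0$ before cross-multiplying, which the paper leaves implicit.
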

\begin{proof}
Since $1 < a < b$, we could get $ab - b < ab - a$, that is $(a - 1)b < (b-1)a\; and\;  \frac{a-1}{b-1} < \frac{a}{b}$. 
\end{proof}
\begin{eqnarray}
    Pr_s &=& 1 - \frac{\binom{|K|-|\tau_{m}|}{|C|}}{\binom{|K|}{|C|}} \nonumber \\
    &=& 1 - \frac{(|K|-|\tau_{m}|)...(|K|-|\tau_{m}|-|C|+1)}{|K|...(|K|-|C|+1)} \nonumber \\
    &=& 1-
    \frac{|K|-|\tau_{m}|}{|K|}...\frac{|K|-|\tau_{m}|-|C|+1}{|K|-|C|+1} \nonumber \\
\end{eqnarray}

By applying theorem~\ref{theo:first}, we get:

\begin{eqnarray}
    Pr_s &>& 1 - \frac{|K|-|\tau_{m}|}{|K|}... \frac{|K|-|\tau_{m}|}{|K|} \nonumber \\
    &=& 1 -  {(\frac{|K|-|\tau_{m}|}{|K|})}^{|C|} \nonumber \\
\label{equ:final}
\end{eqnarray}

In real-world scenarios, large number of clients can be selected at each round, which makes $|K|$ extremely large. As a subset of $K$, the size of $C$ can also be sufficiently large. Since $\frac{|K|-|\tau_{m}|}{|K|} < 1$, we get $(\frac{|K|-|\tau_{m}|}{|K|})^{|C|} \approx 0$, which makes $Pr_s \approx 1$,  meaning in a vanilla cross-device FL training process, the probability of selecting at least one client from the slowest level is reasonably high for each round. According to Equation~\eqref{latency}, the  random selection strategy adopted by state-of-the-art cross-device FL system may suffer from a slow training performance.
 
\begin{algorithm}[H]
  \caption{Federated Averaging Training Algorithm}
  \label{alg:fedAvg}

\begin{algorithmic}[1]

  \STATE \textbf{Aggregator:} initialize weight $w_0$
  \FOR{each round $r = 0$ {\bfseries to} $N-1$}
    \STATE $C_r = $ (random set of $|C|$ clients)
    \FOR{each client c $\in$ $C_r$ \textbf{in parallel}}
    \STATE $w_{r+1}^c = TrainClient(c)$
    \STATE $s_c = $ (training size of c)
    \ENDFOR
    \STATE $w_{r+1} = \sum_{c=1}^{|C|} w_{r+1}^c * \frac{s_c}{\sum_{c=1}^{|C|} s_c}$
    
  \ENDFOR
\end{algorithmic}
\end{algorithm}
\subsection{Experimental Study}

To experimentally verify the above analysis and demonstrate the impact of resource heterogeneity and data quantity heterogeneity, we conduct a study with a setup similar to the paper~\cite{chai2019towards}. The testbed is briefly summarized as follows -
\begin{itemize}
    \item We use a total of 20 clients and each client is further divided into 5 groups with 4 client per group.
    \item We allocate 4 CPUs, 2 CPUs, 1 CPU, 1/3 CPU, 1/5 CPU for every client from group 1 through 5 respectively to emulate the resource heterogeneity.
    \item The model is trained on the image classification dataset CIFAR10~\cite{krizhevsky2014cifar} using the vanilla cross-device FL process ~\ref{subsec:traditional_fl} (model and learning parameters are detailed in Section \ref{sec:results}).
    \item Experiments with different data size for every client are conducted to produce data heterogeneity results.
\end{itemize}
 
As shown in Fig. \ref{fig:cifar10_characterization} (a), with the same amount of CPU resource, increasing the data size from 500 to 5000 results in a near-linear increase in training time per round. As the amount of CPU resources allocated to each client increases, the training time gets shorter. Additionally, the training time increases as the number of data points increase with the same number of CPUs. These preliminary results imply that the straggler issues can be severe under a complicated and heterogeneous FL environment.

To evaluate the impact of data distribution heterogeneity, we keep the same CPU resources for every client (i.e., 2 CPUs) and generate a biased class and feature distribution following~\cite{zhao2018federated}. 
Specifically, we distribute the dataset in such a way that every client has equal number of images from 2 (non-IID(2)), 5 (non-IID(5)) and 10 (non-IID(10)) classes, respectively. We train the model on Cifar10 dataset 
using the vanilla FL system as described in Section ~\ref{subsec:traditional_fl} with the model and training parameters detailed in Section ~\ref{sec:results}. 
As seen in Fig. \ref{fig:cifar10_characterization} (b), there is a clear difference in the accuracy with different non-IID distributions. The best accuracy is given by the IID since it represents a uniform class and feature distribution. As the number of classes per client is reduced, we observe a corresponding decrease in accuracy. Using 10 classes per client reduces the final accuracy by around 6\% compared to IID (it is worth noting that non-IID(10) is not the same as IID as the feature distribution in non-IID(10) is skewed compare to IID). In the case of 5 classes per client, the accuracy is further reduced by 8\%. The lowest accuracy is observed in the 2 classes per client case, which has a significant 18\% drop in accuracy. 

These studies demonstrate that the data and resource heterogeneity can cause significant impact on training time and training accuracy in cross-device FL. To tackle these problems, we propose \proj  --- a tier-based FL system which introduces a heterogeneity-aware client selection methodology that selects the most profitable  clients during each round of the training to minimize the heterogeneity impact while preserving the FL privacy proprieties, thus improving the overall training performance of cross-device FL (in the following, we use FL to denote cross-device FL for simplicity).

\begin{figure}[t]
     \centering
         \subfloat[Average training  \newline time per round  ]{\label{subfig:characterization_cifar10_training_time}\includegraphics[width=0.25\textwidth]{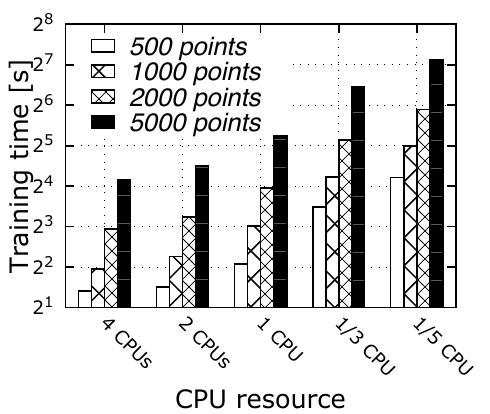}}
         \subfloat[Accuracy with varying class \newline distribution per client]{\label{subfig:characterization_cifar10_accuracy}
         \includegraphics[width=0.25\textwidth]{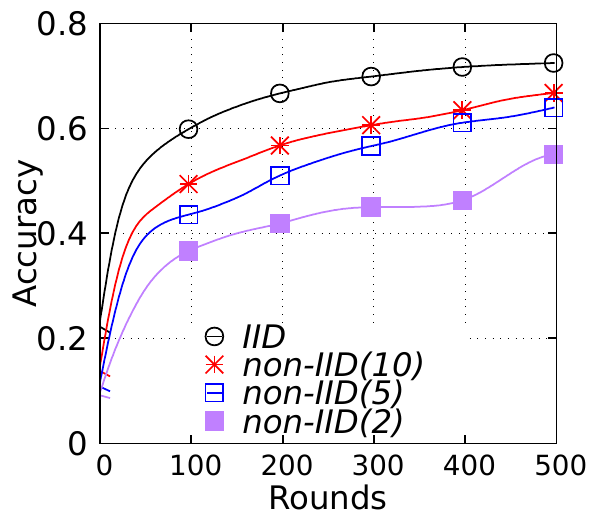}}
   \caption{(a) Training time per round (logscale) with resource heterogeneity and data amount heterogeneity; (b) accuracy under varying number of classes per client (non-IID).} 
    \label{fig:cifar10_characterization}
 \end{figure}
\section{{\proj}: A Tier-based Federated Learning System}
\label{sec:design}
\label{methodology}
In this section, we present the design of the proposed tier-based federated learning system \proj. 
The key idea of a tier-based system is that given the global training time of a round is bounded by the slowest client selected in that round (see Equation~\ref{latency}), selecting clients with similar response latency in each round can significantly reduce the training time.
We first give an overview of the architecture and the main flow of \proj system.
Then we introduce the profiling and tiering approach.
Based on the profiling and tiering results, we explain how a tier selection algorithm can potentially mitigate the heterogeneity impact through a straw-man proposal as well as the limitations of such static selection approach.
To this end, we propose an adaptive tier selection algorithm to address the limitations of the straw-man proposal.
Finally, we propose an analytical model through which one can estimate the expected training time using selection probabilities of tiers and the total number of training rounds.

\subsection{System Overview}
\label{subsec:tiering_algo}
The overall system architecture of {\proj} is present in Fig.~\ref{fig:Tiering_overview}.
{\proj} follows the system design to the state-of-the-art FL system~\cite{bonawitz2017practical} and adds two new components: a \emph{tiering module} (a profiler \& tiering algorithms) and a \emph{tier scheduler}. 
These newly added components can be incorporated into the coordinator of the existing FL system \cite{bonawitz2019towards}.  
It is worth to note that in Fig.~\ref{fig:Tiering_overview}, we only show a single aggregator rather than the hierarchical master-child aggregator design for a clean presentation purpose. {\proj} supports master-child aggregator design for scalability and fault tolerance.

In {\proj}, the first step is to collect the latency metrics of all the available clients through a lightweight profiling as detailed in Section \ref{subsec:tiering_algo}. 
The profiled data is further utilized by our tiering algorithm. This groups the clients into separate logical pools called \textit{tiers}. Once the scheduler has the tiering information (i.e., tiers that the clients belong to and the tiers' average response latencies), 
the training process begins. Different from vanilla FL that employs a random client selection policy, in {\proj} the scheduler selects a tier and then randomly selects targeted number of  clients from that tier.
After the selection of clients, the training proceeds as state-of-the-art FL system does. 
By design, {\proj} is non-intrusive and can be easily plugged into any existing FL system in that the tiering and scheduler module simply regulate client selection without intervening the underlying training process.

\begin{figure}[t]
    \centering
        {\label{subfig:Overview_tiering_FL}\includegraphics[width=0.48\textwidth]{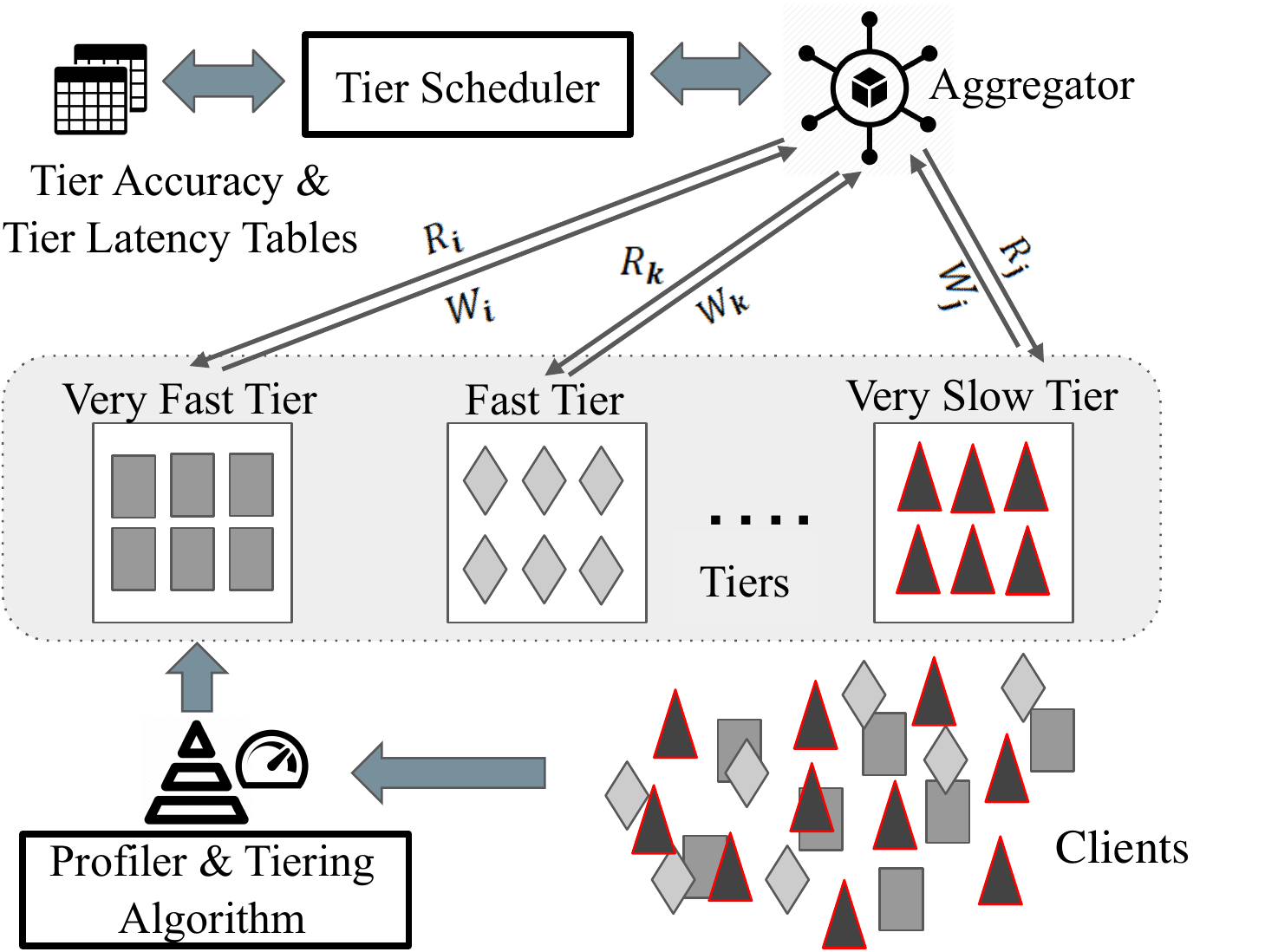}} 
    \caption{Overview of \proj.
    }
    \label{fig:Tiering_overview}
    \vspace{-0.2cm}
\end{figure}

\subsection{Profiling and Tiering}
\label{subsec:tiering_algo}
Given the global training time of a round is bounded by the slowest client selected in that round (see Equation~\ref{latency}), if we can select clients with similar response latency in each round, the training time can be improved.
However, in FL, the response latency is unknown a priori, which makes it challenging to carry out the above idea. 
To solve this challenge, we introduce a process through which the clients are \emph{tiered} (grouped) by the \textit{Profiling and Tiering} module as shown in Fig.~\ref{fig:Tiering_overview}. 
As the first step, all available clients are initialized with a response latency $L_i$ of 0. 
The profiling and tiering module then assigns all the available clients the profiling tasks.
The profiling tasks execute for $sync\_rounds$ rounds and in each profiling round, the aggregator asks every client to train on the local data and waits for their acknowledgement for $T_{max}$ seconds. 
All clients that respond within $T_{max}$ have their response latency value $RT_i$ incremented with the actual training time, while the ones that have timed out are incremented by $T_{max}$.
After $sync\_rounds$ rounds are completed, the clients with $L_i>=sync\_rounds*T_{max}$ are considered \emph{dropouts} and excluded from the rest of the calculation. 
The collected training latencies from clients creates a histogram,
which is split into $m$ groups and the clients that fall into the same group forms a tier.
The average response latency is then calculated for each group and recorded persistently which is used later for scheduling and selecting tiers.
The profiling and tiering can be conducted periodically for systems with changing computation and communication performance over the time so that clients can be adaptively grouped into the right tiers.

\subsection{Straw-man Proposal: Static Tier Selection Algorithm}
\label{subsecd:static_policy}
In this section, we present a naive static tier-based client selection policy and discuss its limitations, which motivates us to develop an advanced adaptive tier selection algorithm in the next section. 
While the profiling and tiering module introduced in Section~\ref{subsec:tiering_algo} groups clients into $m$ tiers based on response latencies, the tier selection algorithm focuses on how to select clients from the proper tiers in the FL process to improve the training performance. 
The natural way to improve training time is to prioritize towards faster tiers, rather than selecting clients randomly from all tiers (i.e., the full $K$ pool).
However, such selection approach reduces the training time without taking into consideration of the model accuracy and privacy properties.
To make the selection more general, one can specify each tier $n_{j}$ is selected based on a predefined probability, which sums to 1 across all tiers.
Within each tier, $\mid C \mid$ clients are uniform randomly selected.

In a real-world FL scenarios, there can be a large number of clients involved in the FL process (e.g., up to $10^{10}$)~\cite{bonawitz2017practical, li2019federated, kairouz2019advances}. Thus in our tiering-based approach, the number of tiers is set such that $m$ $<<$ $|K|$ and number of clients per tier $n_{j}$ is always greater than $|C|$. 
The selection probability of a tier is controllable, which results in different trade-offs.
If the users' objective is to reduce the overall training time, they may increase the chances of selecting the faster tiers. However, drawing  clients only from the fastest tier may inevitably introduce training bias due to the fact that different clients may own a diverse set of heterogeneous training data spread across different tiers; as a result, such bias may end up affecting the accuracy of the global model. To avoid such undesired behavior, it is preferable to involve clients from different tiers so as to cover a diverse set of training datasets. We perform an empirical analysis on the latency-accuracy trade-off in Section~\ref{sec:results}.
\vspace{0.3cm}
\subsection{Adaptive Tier Selection Algorithm}
\label{dynamic}

While the above naive static selection method is intuitive, it does not provide a method to automatically tune the trade-off to optimize the training performance nor adjust the selection based on changes in the system.
In this section, we propose an adaptive tier selection algorithm that can automatically strike a balance between training time and accuracy, and adapt the selection probabilities adaptively over training rounds based on the changing system conditions. 
\begin{algorithm}[th!]
  \caption{Adaptive Tier Selection Algorithm. $Credits_t$: the credits of Tier $t$, $I$: the interval of changing probabilities, $TestData_t$: evaluation dataset specific to that tier $t$, $A_t^r$: test accuracy of tier $t$ at round $r$, $\tau$: set of Tiers.}
  \label{alg:Dynamic}

\begin{algorithmic}[1]
  \STATE \textbf{Aggregator:} initialize weight $w_0$, $currentTier=1$, $TestData_{t}$, $Credits_{t}$, equal probability with $\frac{1}{T}$, for each tier $t$.
  
  \FOR{each round $r = 0$ {\bfseries to} $N-1$}
           \IF{$r\%I == 0$ and r $\geq$ I}
   \IF{$A_{currentTier}^r \leq A_{currentTier}^{r-I}$}
   \STATE $NewProbs = ChangeProbs(A_1^r,A_2^r...A_T^r)$
   \ENDIF
   \ENDIF
   \WHILE{$True$}
   \STATE $currentTier = $ (select one tier from T tiers with $NewProbs$)
   \IF{$Credits_{currentTier} > 0$}
   \STATE $Credits_{currentTier} = Credits_{currentTier}-1$
   \STATE \textbf{break}
   \ENDIF
   \ENDWHILE
    \STATE $C_r = $ (random set of $|C|$ clients from $currentTier$)
    \STATE $Credits_{currentTier} -= 1$
    \FOR{each client c $\in$ $C_r$ \textbf{in parallel}}
        \STATE $w_{r}^c = TrainClient(c)$
        \STATE $s_c = $ (training size of c)
    \ENDFOR
    \STATE $w_{r} = \sum_{c=1}^{|C|} w_{r+1}^c * \frac{s_c}{\sum_{c=1}^{|C|} s_c}$
    \FOR{each $t$ in $\tau$}
        \STATE $A^r_t = Eval(w_{r}, TestData_t)$
    \ENDFOR
  \ENDFOR
\end{algorithmic}
\end{algorithm}

The observation here is that heavily selecting certain tiers (e.g., faster tiers) may eventually lead to a biased model, {\proj} needs to balance the client selection from other tiers (e.g., slower tiers).
The question being which metric should be used to balance the selection.
Given the goal here is to minimize the bias of the trained model, we can monitor the accuracy of each tier throughout the training process.
A lower accuracy value of a tier $t$ typically indicates that the model has been trained with less involvement of this tier,
therefore tier $t$ should contribute more in the next training rounds. 
To achieve this, we can increase the selection probabilities for tiers with lower accuracy.
To achieve good training time, we also need to limit the selection of slower tiers across training rounds. 
Therefore, we introduce $Credits_t$, a constraint that defines how many times a certain tier can be selected. 

Specifically, a tier is initialized randomly with equal selection probability. After the weights are received and the global model is updated, 
the global model is evaluated on every client for every tier on their respective $TestData$ and their resulting accuracies are stored as the corresponding tier $t$'s accuracy for that round $r$. This is stored in $A^r_t$, which is the mean accuracy for all the clients in tier $t$ in training round $r$.
In the subsequent training rounds, 
the adaptive algorithm updates the probability of each tier based on that tier's test accuracy at every $I$ rounds. 
This is done in the function $ChangeProbs$, which adjusts the probabilities such that the lower accuracy tiers get higher probabilities to be selected for training; then with the new tier-wise selection probabilities ($NewProbs$), a tier which has remaining $Credits_t$ is selected from all available tiers $\tau$. 
The selected tier will have its $Credits_t$ decremented. As clients from a particular tier gets selected over and over throughout the training rounds, the $Credits_t$ for that tier ultimately reduces down to zero, meaning that it will not be selected again in the future. This serves as a control knob for the number of times a tier is selected and by setting this upper-bound, we can limit the amount of times a slower tier contributes to the training, thereby effectively gaining some control over setting a soft upper-bound on the total training time. For the straw-man implementation, we used a skewed probability of selection to manipulate training time. Since we now wish to adaptively change the probabilities, we add the $Credits_t$ to gain control over limiting training time.

On one hand, the tier-wise accuracy $A^t_r$ essentially makes {\proj}'s adaptive tier selection algorithm \emph{data heterogeneity aware}; as such, {\proj} makes the tier selection decision by taking into account the underlying dataset selection biasness, and automatically adapt the tier selection probabilities over time.
On the other hand, $Credits_t$ is introduced to intervene the training time by enforcing a constraint over the selection of the relatively slower tiers.
While $Credits_t$ and $A^r_t$ mechanisms optimize towards two different and sometimes contradictory objectives --- training time and accuracy, {\proj} cohesively synergizes the two mechanisms to strike a balance for the training time-accuracy trade-off. More importantly, with {\proj}, the decision making process is automated, thus relieving the users from intensive manual effort.
The adaptive algorithm is summarized in Algo.~\ref{alg:Dynamic}.

\subsection{Training Time Estimation Model}

In real-life scenarios, the training time and resource budget is typically finite. As a result, FL users may need to compromise between training time and accuracy. 
A training time estimation model would facilitate users to navigate the training time-accuracy trade-off curve to effectively achieve desired training goals.
Therefore, we build a training time estimation model that can estimate the overall training time based on the given latency values and the selection probability of each tier:
\begin{equation}
{L_{all}} =  \sum_{i=1}^{n} (L_{tier\_i} * P_{i}) * R  ,
\end{equation}
where $L_{all}$ is the total training time, $L_{tier\_i}$ is the response latency of tier $i$, $P_{i}$ is the probability of tier $i$, and $R$ is the total number of training rounds.
The model is a sum of products of the tier and latencies, which gives the latency expectation per round. This is multiplied by the total number of training rounds to get the total training time.

\vspace{0.5cm}
\subsection{Discussion: Compatibility with Privacy-Preserving Federated Learning}
FL has been used together with privacy preserving approaches such as differential privacy to prevent attacks  that aim to extract private information ~\cite{shokri2017membership,nasr2018comprehensive}. 
Privacy-preserving FL is based on client-level differential privacy, where the privacy guarantee is defined at each individual client. This can be accomplished by each client implementing a centralized private learning algorithm as their local training approach. For example, with neural networks this would be one or more epochs using the approach proposed in~\cite{abadi2016deep}. This requires each client to add the appropriate noise into their local learning to protect the privacy of their individual datasets. 
Here we demonstrate that \proj is compatible with such privacy preserving approaches.

Assume that for client $c_i$, one round of local training using a differentially private algorithm is ($\epsilon$, $\delta$)-differentially private, where $\epsilon$ bounds the impact any individual may have on the algorithm's output and $\delta$ defines the probability that this bound is violated. Smaller $\epsilon$ values therefore signify tighter bounds and a stronger privacy guarantee. Enforcing smaller values of $\epsilon$ requires more noise to be added to the model updates sent by clients to the FL server which leads to less accurate models. Selecting clients at each round of FL has distinct privacy and accuracy implications for client-level privacy-preserving FL approaches. For simplicity we assume that all clients are adhering to the same privacy budget and therefore same ($\epsilon$, $\delta$) values. Let us first consider the scenario wherein $C$ is chosen uniformly at random each round. Compared with each client participating in each round, the overall privacy guarantee, using random sampling amplification~\cite{beimel2010bounds}, improves from ($\epsilon$, $\delta$) to (O($q\epsilon$), $q\delta$) where $q = \frac{|C|}{|K|}$. This means that there is a stronger privacy guarantee \textit{with the same noise scale}. Clients may therefore add less noise per round or more rounds may be conducted without sacrificing privacy. For the tiered approach the guarantee also improves. Compared to ($\epsilon$, $\delta$) in the all client scenario, the tiered approach improves to an ($O(q_{max} \epsilon), q_{max} \delta$) privacy guarantee where the probability of selecting tier with weight $\theta_{j}$ is given by $\frac{1}{n_{tiers}} * \theta_j$, $q_{max} = \max_{j=1...|n_{tiers}|}q_j$ and $q_j = (\frac{1}{n_{tiers}} * \theta_j) \frac{|C|}{|n_j|}$. 

\vspace{1.0cm}
\section{Experimental Evaluation}
\label{sec:results}

We prototype \proj with both the naive and our adaptive selection approach and perform extensive testbed experiments under three scenarios: resource heterogeneity, data heterogeneity, and resource plus data heterogeneity.

\subsection{Experimental Setup}
\label{subsec:setup}

\noindent {\bf Testbed.}
As a proof of concept case study, we build a FL testbed for the syntehtic datasets by deploying 50 clients on a CPU cluster where each client has its own exclusive CPU(s) using Tensorflow ~\cite{abadi2016tensorflow}. 
In each training round, 5 clients are selected to train on their own data and send the trained weights to the server which aggregates them and updates the global model similar to \cite{mcmahan2016communication, bonawitz2019towards}. 
\cite{bonawitz2019towards} introduces multiple levels of server aggregators in order to achieve scalability and fault tolerance in extreme scale situations, i.e., with millions of clients. 
In our prototype, we simplify the system to use a powerful single aggragator as it is sufficient for our purpose here, i.e., our system does not suffer from scalabiltiy and fault tolerance issues, though multiple layers of aggregator can be easily integrated into \proj.

 We also extend the widely adopted large scale distributed FL framework LEAF  \cite{caldas2018leaf} in the same way. LEAF provides inherently non-IID with data quantity and class distributions heterogeneity. LEAF framework does not provide the resource heterogeneity among the clients, which is one of the key properties of any real-world FL system. The current implementation of the LEAF framework is a simulation of a FL system where the clients and server are running on the same machine. To incorporate the resource heterogeneity we first extend LEAF to support the distributed FL where every client and the aggregator can run on separate machines, making it a real distributed system. Next, we deploy the aggregator and clients on their own dedicated hardware. This resource assignment for every client is done through uniform random distribution resulting in equal number of clients per hardware type. By adding the resource heterogeneity and deploying them to separate hardware, each client mimics a real-world edge-device. Given that LEAF already provides non-IIDness, with the newly added resource heterogeneity feature the new framework provides a real world FL system which supports data quantity, quality and resource heterogeneity. For our setup, we use exactly the same sampling size used by the LEAF ~\cite{caldas2018leaf} paper (0.05) resulting in a total of 182 clients, each with a variety of image quantities.

\begin{table}[]
\caption{Scheduling Policy Configurations.}

\resizebox{\columnwidth}{!}{
\begin{tabular}{|c|c|c|c|c|c|c|}
\hline
DataSet                                                                        &      Policy          & \multicolumn{5}{c|}{Selection probabilities}       \\ \hline
\multirow{6}{*}{Cifar10 / FEMNIST}                                                      &   & Tier 1   & Tier 2   & Tier 3   & Tier 4   & Tier 5 \\ \cline{2-7} 
                                                                              & \textit{vanilla}       & N/A      & N/A      & N/A      & N/A      & N/A    \\
                                                                              & \textit{slow}       & $0.0$    & $0.0$    & $0.0$    & $0.0$    & $1.0$  \\
                                                                              & \textit{uniform}       & $0.2$    & $0.2$    & $0.2$    & $0.2$    & $0.2$  \\
                                                                              & \textit{random}       & $0.7$    & $0.1$    & $0.1$    & $0.05$   & $0.05$ \\
                                                                              & \textit{fast}       & $1.0$    & $0.0$    & $0.0$    & $0.0$    & $0.0$  \\
                                                                           \hline
                                                                         
\multirow{5}{*}{\begin{tabular}[c]{@{}c@{}}MNIST\\ FMNIST\end{tabular}}      & \textit{vanilla}      & N/A      & N/A      & N/A      & N/A      & N/A    \\
                                                                             & \textit{uniform}      & $0.2$    & $0.2$    & $0.2$    & $0.2$    & $0.2$  \\
                                                                              & \textit{fast1}      & $0.225$  & F$0.225$  & $0.225$  & $0.225$  & $0.1$  \\
                                                                              & \textit{fast2}      & $0.2375$ & $0.2375$ & $0.2375$ & $0.2375$ & $0.05$ \\
                                                                              & \textit{fast3}      & $0.25$    & $0.25$    & $0.25$    & $0.25$    & $0.0$  \\
 \hline
\end{tabular}
}
\label{tab:conf_selection_policy}
\end{table}

\begin{table}[t!]
   \centering
    \caption{Estimated VS Actual Training Time.}
    \scalebox{1}{
    \begin{tabular}{|c|c|c|c|}
        \hline
        Policy & Estimated [s] & Actual [s] & MAPE [\%]\\
        \hline
        \textit{slow} & $46242$ & $44977$ & $2.76$  \\
        \textit{uniform} & $12693$ & $12643$ & $0.4$   \\
        \textit{random} & $5143$ & $5053$ & $1.8$   \\
        \textit{fast} & $1837$ & $1750$ & $5.01$   \\
        \hline
    \end{tabular}
    }
    \label{tab:training_prediction}
\end{table}
\subsection{Experimental Results}

\noindent {\bf Models and Datasets.}
We use four image classification applications for evaluating \proj. We use {\it MNIST \footnote{http://yann.lecun.com/exdb/mnist/} and Fashion-MNIST \cite{xiao2017fashion}}, where each contains 60,000 training images and 10,000 test images, where each image is 28x28 pixels. We use a CNN model for both datasets, which starts with a 3x3 convolution layer with 32 channels and ReLu activation, followed by a 3x3 convolution layer with 64 channels and ReLu activation, a MaxPooling layer of size 2x2, a fully connected layer with 128 units and ReLu activation, and a fully connected layer with 10 units and ReLu activation. Dropout 0.25 is added after the MaxPooling layer, dropout 0.5 is added before the last fully connected layer. We use {\it Cifar10} \cite{krizhevsky2014cifar}, which contains richer features compared to MNIST and Fashion-MNIST. There is a total of 60,000 colour images, where each image has 32x32 pixels. The full dataset is split evenly between 10 classes, and partitioned into 50,000 training and 10,000 test images. The model is a four-layer convolution network ending with two fully-connected layers before the softmax layer. It was trained with a dropout of 0.25. Lastly we also use the FEMNIST data set from LEAF framework \cite{caldas2018leaf}. This is an image classification dataset which consists of 62 classes and the dataset is inherently non-IID with data quantity and class distributions heterogeneity. We use the standard model architecture as provided in LEAF \cite{caldas2018expanding}.

\begin{figure}[h!]
\label{}
    \centering
         
        \subfloat[Training time 500 rounds]{\label{subfig:cifar10_IID_training_time}\includegraphics[width=0.25\textwidth]{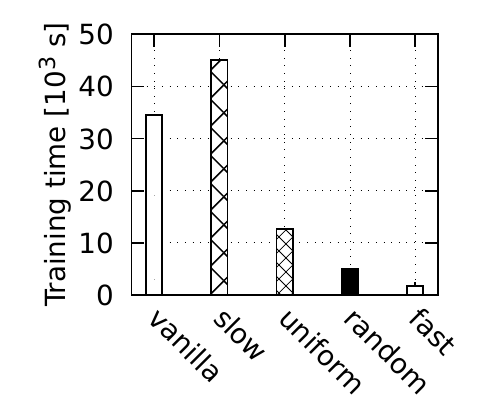}}
        \subfloat[Training time 500 rounds]{\label{subfig:cifar10IID_puredata_hetro_training_time}\includegraphics[width=0.25\textwidth]{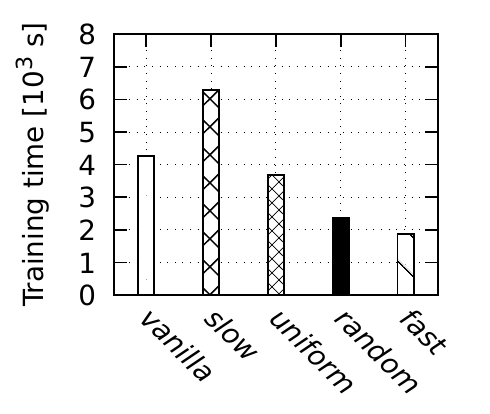}} 
        \\
        \subfloat[Accuracy over rounds]{\label{subfig:cifar10_IID_overall_accuracy}\includegraphics[width=0.25\textwidth]{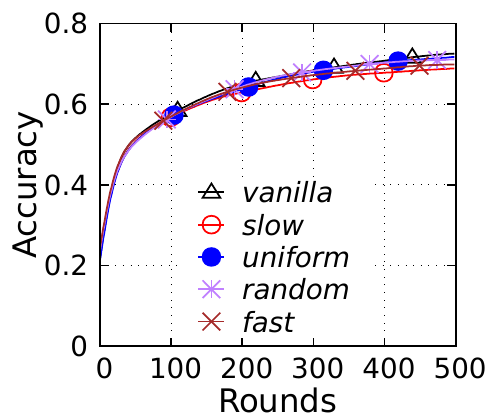}}
        \subfloat[Accuracy over round]{\label{subfig:cifar10IID_puredata_hetro_accuracy}\includegraphics[width=0.25\textwidth]{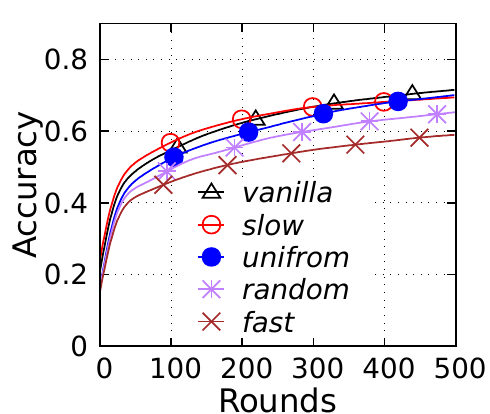}}
        \\
         \subfloat[Accuracy over time]{\label{subfig:cifar10_IID_overtime_accuracy}\includegraphics[width=0.25\textwidth]{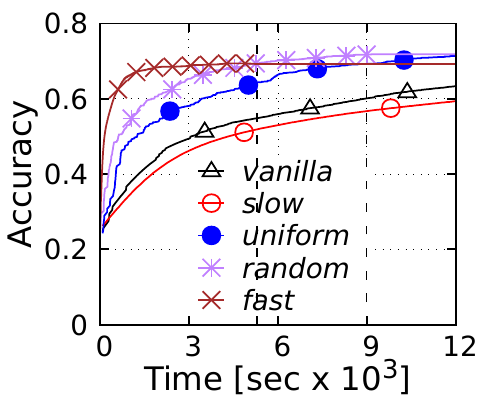}}
         \subfloat[Accuracy over time]{\label{subfig:cifar10IID_puredata_hetro_accuracy_overtime}\includegraphics[width=0.25\textwidth]{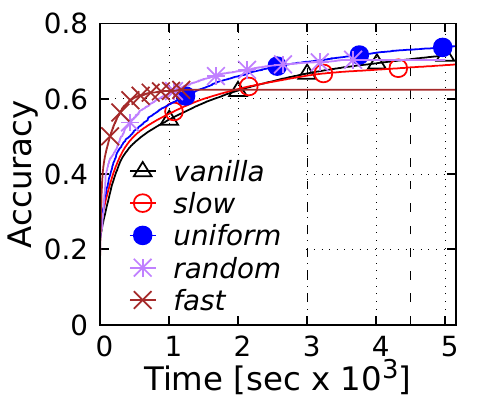}}
          \caption{Comparison results for different selection policies on Cifar10 with resource heterogeneity (Column 1) and data quantity heterogeneity (Column 2).}
    \label{fig:cifar10_IID_IID_resource_heterogeneity}
    
\end{figure}

\noindent {\bf Training Hyperparameters.}
We use RMSprop as the optimizer in local training and set the initial learning rate ($\eta$) as 0.01 and decay as 0.995. Local batch size of each client is 10, and local epochs is 1. The total number of clients ($|K|$) is $50$ and the number of participated clients ($|C|$) at each round is $5$. For FEMNIST we use the default training parameters provided by the LEAF Framework (SGD with lr 0.004, batch size 10). We train for a total of 2000 rounds for FEMNIST and 500 rounds for the synthetic datasets. Every experiment is run 5 times and we use the average values.

\begin{figure*}[t!]
  
    \centering
        \subfloat[\textit{vanilla}]{\label{subfig:cifar10_data_heterogeneity_training_time_nopolicy}\includegraphics[width=0.2\textwidth]{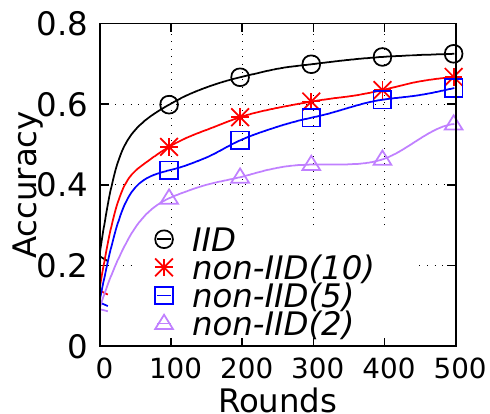}}
        \subfloat[\textit{slow} ]{\label{subfig:cifar10_data_heterogeneity_overtime_accuracy_00001}\includegraphics[width=0.2\textwidth]{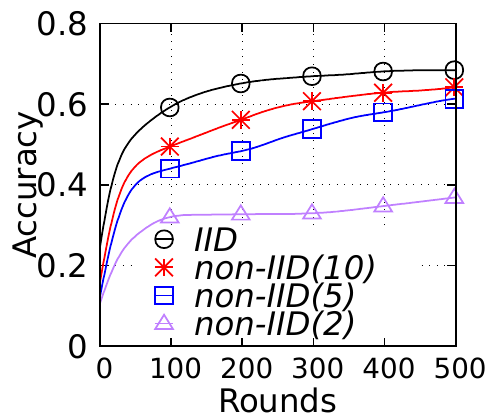}}
        \subfloat[
        \textit{uniform}]{\label{subfig:cifar10_data_heterogeneity_accuracy_22222}\includegraphics[width=0.2\textwidth]{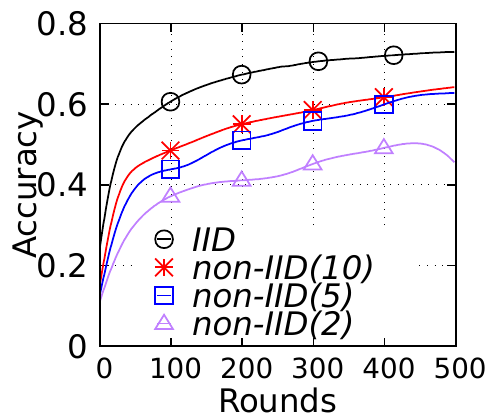}}
        \subfloat[\textit{random}]{\label{subfig:cifar10_data_heterogeneity_accuracy_71155}\includegraphics[width=0.2\textwidth]{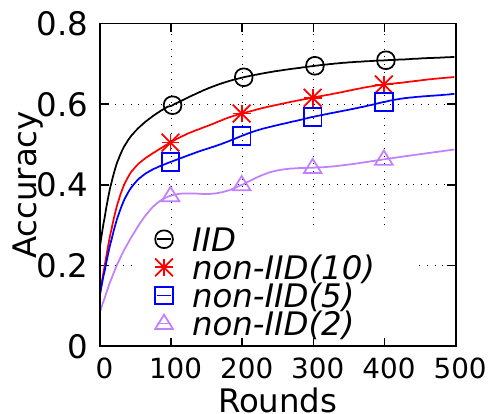}}
        \subfloat[\textit{fast}]{\label{subfig:cifar10_data_heterogeneity_accuracy_00001}\includegraphics[width=0.2\textwidth]{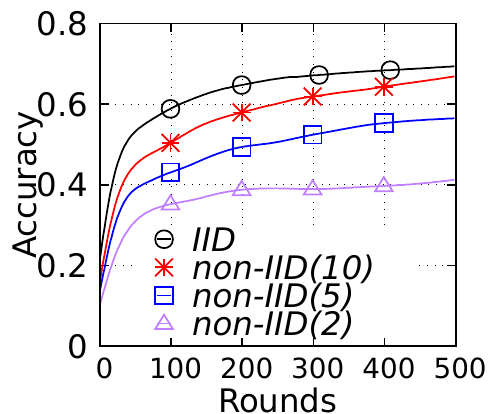}}
    \caption{Comparison results for different selection policies on Cifar10 with differnt levels of non-IID heterogeneity (Class) and fixed resources.}
    \label{fig:cifar10_sensitivity_analysis_accuracy}
    \vspace{-0.5cm}
\end{figure*}

\noindent {\bf Heterogeneous Resource Setup.}
Among all the clients, we split them into 5 groups with equal clients per group. 
For MNIST and Fashion-MNIST, each group is assigned with $2$ CPUs, $1$ CPU, $0.75$ CPU, $0.5$ CPU, and $0.25$ CPU per part respectively.
For the larger Cifar10 and FEMINIST model, each group is assigned with $4$ CPUs, $2$ CPUs, $1$ CPU, $0.5$ CPU, and $0.1$ CPU per part respectively. 
This leads to varying training time for clients belong to different groups.
By using the tiering algorithm of \proj, there are 5 tiers 

\noindent {\bf Heterogeneous Data Distribution.}
FL differs from the datacenter distributed learning in that the clients involved in the training process may have non-uniform data distribution in terms of amount of data per client and the non-IID data distribution.
$\bullet$ For {\it data quantity heterogeneity}, the training data sample distribution is 10\%, 15\%, 20\%, 25\%, 30\% of total dataset for difference groups, respectively, unless otherwise specifically defined. 
$\bullet$ For {\it non-IID heterogeneity}, 
we use different non-IID strategies for different datasets. For MNIST and Fashion-MNIST, we adopt the setting in \cite{mcmahan2016communication}, where we sort the labels by value first, divide into 100 shards evenly, and then assign each client two shards so that each client holds data samples from at most two classes. 
For Cifar10, we shard the dataset unevenly in a similar way and limit the number of classes to 5 per client (non-IID(5)) following \cite{zhao2018federated}, \cite{liu2019edge} unless explicitly mentioned otherwise. In the case of FEMINIST we use its default non-IID-ness.

\noindent {\bf Scheduling Policies.}
We evaluate several different naive scheduling policies of the proposed tier-based selection approach, defined by the selection probability from each tier, and compare it with the state-of-the-practice policy (or no policy) that existing FL works adopt, i.e., randomly select 5 clients from all clients in each round \cite{mcmahan2016communication, bonawitz2019towards}, agnostic to any heterogeneity in the system. We name it as \framebox{\textit{vanilla}}.
\framebox{\textit{fast}} is a policy that \proj only selects the fastest clients in each round. 
\framebox{\textit{random}} demonstrates the case where the selection of the fastest tier is prioritized over slower ones. 
\framebox{\textit{uniform}} is a base case for our tier-based naive selection policy where every tier has an equal probability of being selected. 
\framebox{\textit{slow}} is the worst policy that \proj only selects clients from the slowest tiers and we only include it here for reference purpose so that we can see a performance range between the best case and the worst case scenarios for static tier-based selection approach.
We use the above policies for CIFAR-10 and FEMINIST training.
For MNIST and Fashion-MNIST, given it is a much more lightweight workload, we focus on demonstrating the sensitivity analysis when the policy prioritizes more aggressively towards the fast tier, i.e., from \framebox{\textit{fast1}} to \framebox{\textit{fast3}}, the slowest tier's selection probability has reduced from 0.1 to 0 while all other tiers got equal probability.
We also include the \framebox{\textit{uniform}} policy for comparison, which is the same as in CIFAR-10. 
Table~\ref{tab:conf_selection_policy} summarizes all these scheduling policies by showing their selection probabilities.

\subsubsection{Training Time Estimation via Analytical Model} 

In this section, we evaluate the accuracy of our training time estimation model on different naive tier selection policies by comparing the estimation results of the model with the measurements obtained from test-bed experiments. The estimation model takes as input of the profiled average latency of each tier, the selection probabilities, and total number of training rounds to estimate the training time. 
We use mean average prediction error (MAPE) as the evaluation metric, which is defined as follows:

\begin{figure}[h!]
    \centering
        \subfloat[Training time 500 rounds]{\label{subfig:Mnist_nonIID_time}\includegraphics[width=0.25\textwidth]{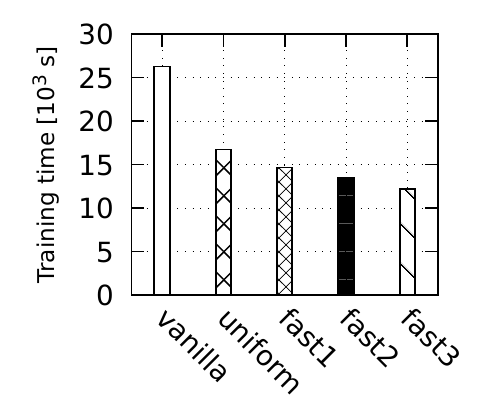}}
        \subfloat[Training time 500 rounds]{\label{subfig:FMnist_nonIID_time}\includegraphics[width=0.25\textwidth]{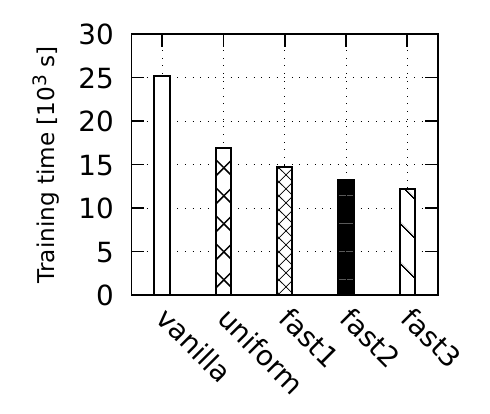}}\\
        \subfloat[Accuracy over round]{\label{subfig:Mnist_nonIID_Accuracy}\includegraphics[width=0.25\textwidth]{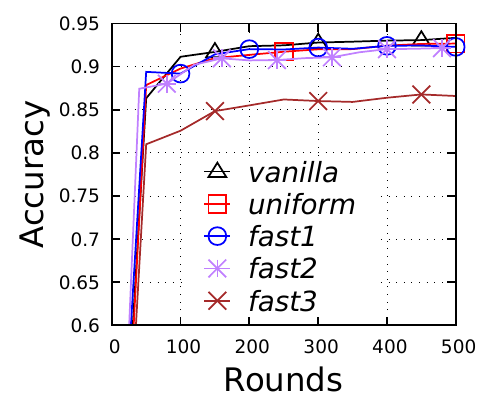}}
        \subfloat[Accuracy over round]{\label{subfig:FMnist_nonIID_Accuracy}\includegraphics[width=0.25\textwidth]{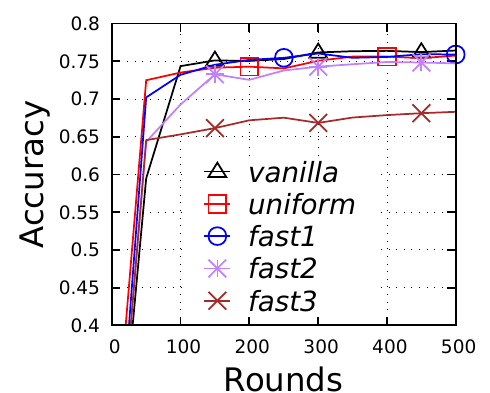}}
        \\
    \caption{Comparison results for different selection policies on MNIST (Column 1) and FMNIST (Column 2) with resource plus data heterogeneity.}
    \label{mnist:nonIID}
    \vspace{-0.3cm}
\end{figure}
\begin{equation}
  \text{MAPE} = \frac{|L_{all}^{est} - L_{all}^{act}|}{L_{all}^{act}} * 100 ,
\end{equation}

where $L_{all}^{est}$ is the estimated training time calculated by the estimation model and $L_{all}^{act}$ is the actual training time measured during the training process. 
Table~\ref{tab:training_prediction} demonstrates the comparison results. 
The results suggest the analytical model is very accurate as the estimation error never exceeds more than 6 \%.

\subsubsection{Resource Heterogeneity}
In this sections, we evaluate the performance of \proj in terms of training time and model accuracy in a resource heterogeneous environment as depicted in~\ref{subsec:setup} and we assume there is no data heterogeneity. 
In practice, data heterogeneity is a norm in FL, we evaluate this scenario to demonstrate how \proj tame resource heterogeneity alone and we evaluate the scenario with both resource and data heterogeneity in Section~\ref{resoure_plus_data}.

In the interest of space, we only present the Cifar10 results here as MNIST and Fashion-MNIST share the similar observations.
The results are organized in Fig.~\ref{fig:cifar10_IID_IID_resource_heterogeneity} (column 1), which clearly indicate that when we prioritize towards the fast tiers, the training time reduces significantly. 
Compared with \textit{vanilla}, \textit{fast} achieves almost 11 times improvement in training time, see Fig.~\ref{fig:cifar10_IID_IID_resource_heterogeneity} (a).
One interesting observation is that even \textit{uniform} has an improvement of over 6 times over the \textit{vanilla}. 
This is because the training time is always bounded by the slowest client selected in each training round. 
In \proj, selecting clients from the same tier minimizes the straggler issue in each round, and thus greatly improves the training time. 
For accuracy comparison, Fig.~\ref{fig:cifar10_IID_IID_resource_heterogeneity} (c) shows that the difference between polices are very small, i.e., less than 3.71\% after 500 rounds.
However, if we look at the accuracy over wall-clock time, \proj achieves much better accuracy compared to \textit{vanilla}, i.e., up to 6.19\% better if training time is constraint, thanks to the much faster per round training time brought by \proj, see Fig.~\ref{fig:cifar10_IID_IID_resource_heterogeneity} (e). Note here that different policies may take very different amount of wall-clock time to finish 500 rounds.

\begin{figure}[t]
    \centering
        \subfloat[Training time 500 rounds]{\label{subfig:cifar10_nonIID_training_time}\includegraphics[width=0.25\textwidth]{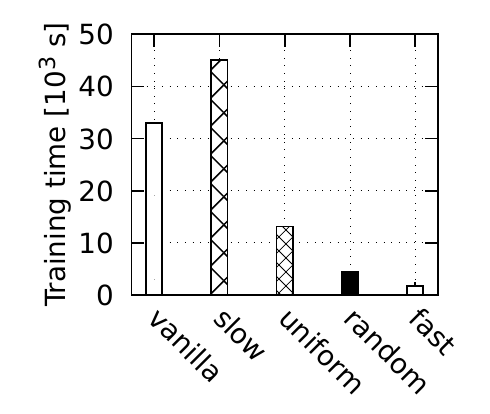}}
        \subfloat[Training time 500 rounds]{\label{subfig:cifar10_NonIID_resource_plus_data_hetrogenity_overall_training_time}\includegraphics[width=0.25\textwidth]{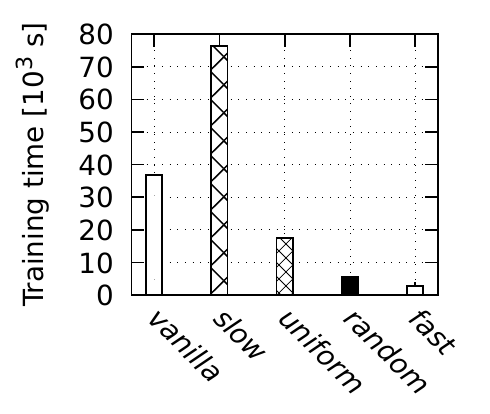}}\\
       \subfloat[Accuracy over rounds]{\label{subfig:cifar10_nonIID_overall_accuracy}\includegraphics[width=0.25\textwidth]{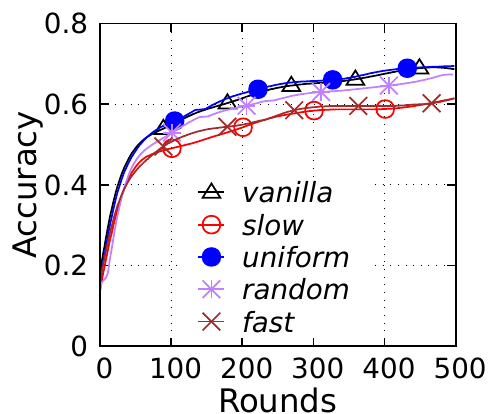}}
       \subfloat[Accuracy over rounds]{\label{subfig:cifar10_NonIID_resource_plus_data_hetrogenity_overall_accuracy}\includegraphics[width=0.25\textwidth]{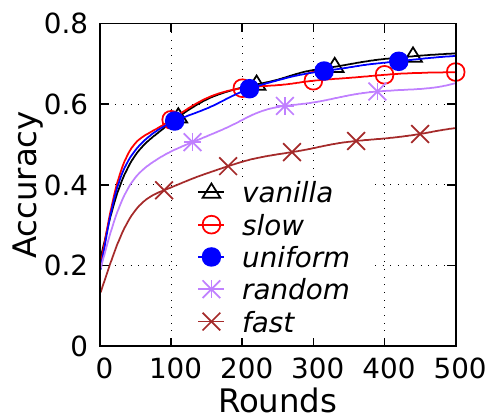}} \\
        \subfloat[Accuracy over time]{\label{subfig:cifar10_nonIID_overtime_accuracy}\includegraphics[width=0.25\textwidth]{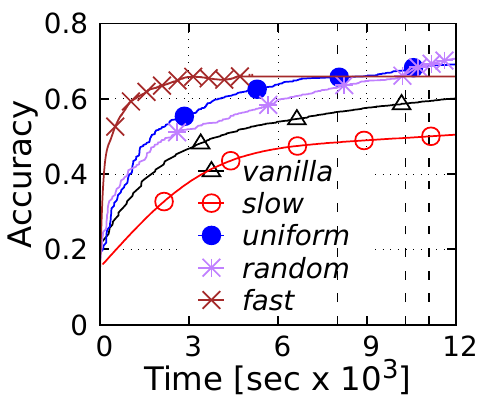}}
        \subfloat[Accuracy over time ]{\label{subfig:cifar10_NonIID_resource_plus_datahetrogenity_oertime_accuracy}\includegraphics[width=0.25\textwidth]{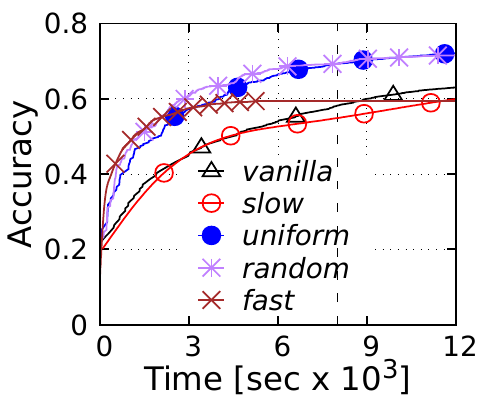}}
        \caption{Comparison results for different selection policies on Cifar10 with resource plus non-IID heterogeneity heterogeneity (Column 1) and resource, data quantity, and non-IID heterogeneity heterogeneity (Column 2).}
        \vspace{-4mm}
    \label{fig:cifar10_nonIID_resource_hetro}
\end{figure}

\subsubsection{Data Heterogeneity}
\label{data_heterogeneity}
 
In this section, we evaluate data heterogeneity due to both {\it data quantity heterogeneity} and {\it non-IID heterogeneity} as depicted in Section~\ref{subsec:setup}.
To demonstrate only the impact from data heterogeneity, we allocate homogeneous resource to each client, i.e., 2 CPUs per client.

\noindent $\bullet$ {\it Data quantity heterogeneity.} The training time and accuracy results are show in Fig.~\ref{fig:cifar10_IID_IID_resource_heterogeneity} (column 2).
In the interest of space, we only show Cifar10 results here.
From the training time comparison in Fig.~\ref{fig:cifar10_IID_IID_resource_heterogeneity} (b), it is interesting that \proj also helps in data heterogeneity only case and achieves up to 3 times speedup. The reason is that {\it data quantity heterogeneity} may also result in different round time, which shares the similar effect as resource heterogeneity. Fig.~\ref{fig:cifar10_IID_IID_resource_heterogeneity} (d) and (f) show the accuracy comparison, where we can see \textit{fast} has relatively obvious drop compared to others because Tier 1 only contains 10\% of the data, which is a significant reduction in volume of the training data. \textit{slow} is also a heavily biased policy towards only one tier, but Tier 5 contains 30\% of the data thus \textit{slow} maintains good accuracy while worst training time.
These results imply that like resource heterogeneity only, data heterogeneity only can also benefit from \proj. However, policies that are too aggressive toward faster tier needs to be used very carefully as clients in fast tier achieve faster round time due to using less samples. 
It is also worth to point out that in our experiments the total amount of data is relatively limited. In a practical case where data is significantly more, the accuracy drop of \textit{fast} is expected to be less pronounced.

\noindent $\bullet$ {\it non-IID heterogeneity.}
We observe that non-IID heterogeneity does not impact the training time. Hence, we omit the results here. However, non-IID heterogeneity effects the accuracy.
Fig. \ref{fig:cifar10_sensitivity_analysis_accuracy} shows the accuracy over rounds given 2, 5, and 10 classes per client in a non-IID setting. We also show the IID results in plot for comparison.
These results show that as the heterogeneity level in non-IID heterogeneity increases, the accuracy impact also increases for all policies due to the strongly biased training data.
Another important observation is that \textit{vanilla} case and \textit{uniform} have a better resilience than other policies, thanks to the unbiased selection behavior, which helps minimize further bias introduced during the client selection process.

\begin{figure}[t]
    \centering
        \subfloat[Training time for 500 rounds]{\label{subfig:cifar10_noniid_dynamic_overall_training_time}\includegraphics[width=0.25\textwidth]{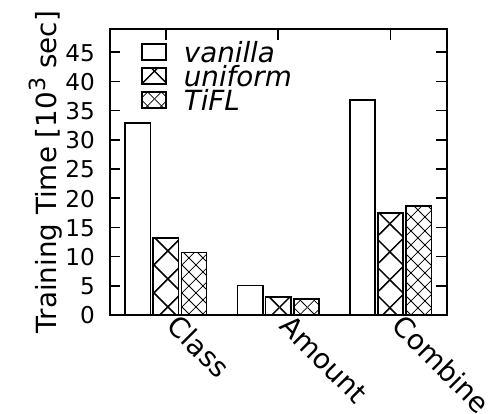}} 
        \subfloat[Accuracy at 500 rounds]{\label{subfig:cifar10_noniid_dynamic_overall_accuracy}\includegraphics[width=0.25\textwidth]{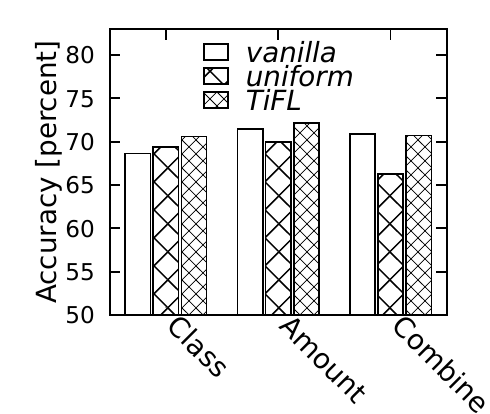}} \\
    \caption{Comparison results for different selection policies on Cifar10 with data quantity heterogeneity (Amount), non-IID heterogeneity (Class), and resource plus data heterogeneity (Combine).} 
    \label{fig:cifar10_noniid_dynamic}
\end{figure}
\subsubsection{Resource and Data Heterogeneity}
\label{resoure_plus_data}
This section presents the most practical case study, since here we evaluate with both resource and data heterogeneity combined. 

\textbf{MNIST} and \textbf{Fashion-MNIST (FMNIST)} results are shown in Fig.~\ref{mnist:nonIID} columns 1 and 2 respectively.
Overall, policies that are more aggressive towards the fast tiers bring more speedup in training time.
For accuracy, all polices of \proj are close to \textit{vanilla}, except \textit{fast3} falls short as it completely ignores the data in Tier 5.

\begin{figure*}[t!]
 \vspace{-0.3cm}
    \centering
        \subfloat[2-class per client]{\label{subfig:cifar10_data_heterogeneity_accuracy_2classes}\includegraphics[width=0.25\textwidth]{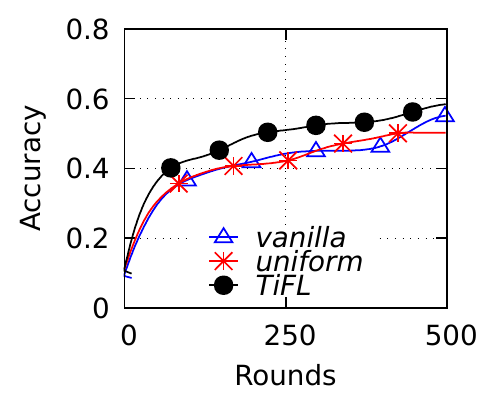}}
        \subfloat[5-class per client]{\label{subfig:cifar10_data_heterogeneity_accuracy_5classes}\includegraphics[width=0.25\textwidth]{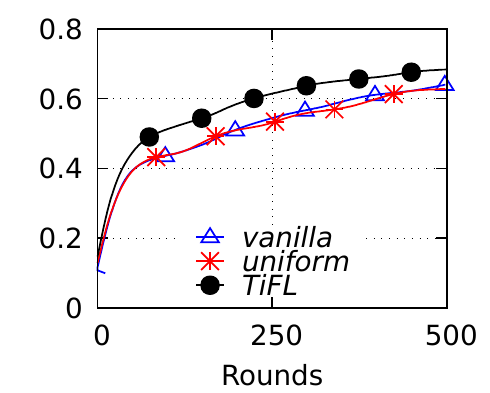}}
        \subfloat[10-class per client]{\label{subfig:cifar10_data_heterogeneity_accuracy_10classes}\includegraphics[width=0.25\textwidth]{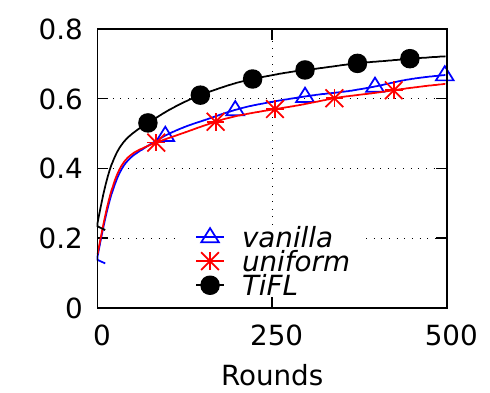}}
    \caption{Comparison results of Cifar10 under non-IID heterogeneity (Class) for different client selection policies with fixed resources (2 CPUs) per client.} 
    \label{fig:cifar10_dynamic_data_heterogeneity_accuracy}
\end{figure*}

\begin{figure}[h!]
\vspace{-.3cm}
    \centering
        \subfloat[Training time for 2000  rounds]{\label{subfig:leaf_noniid_dynamic_overall_training_time}\includegraphics[width=0.25\textwidth]{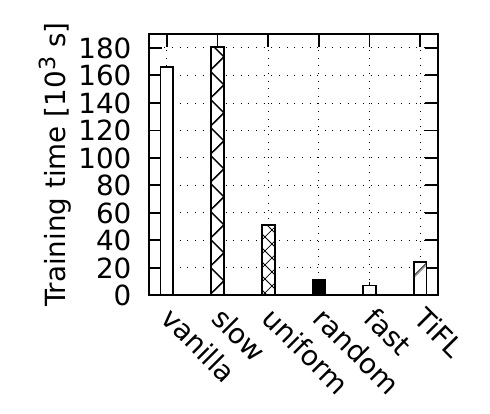}} 
        \subfloat[Accuracy over rounds]{\label{subfig:leaf_noniid_dynamic_overall_accuracy}\includegraphics[width=0.25\textwidth]{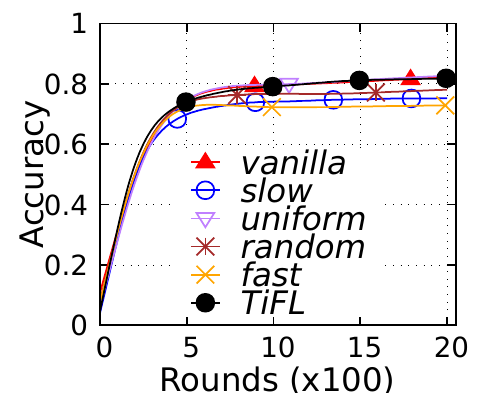}} \\
    \caption{Comparison results for different selection policies on LEAF with default data heterogeneity (quantity, non-IID heterogeneity), and resource heterogeneity.} 
    \label{fig:leaf_noniid_dynamic}
    \vspace{-.2cm}
\end{figure}

\textbf{Cifar10} results are shown in Fig.~\ref{fig:cifar10_nonIID_resource_hetro} column 1. It presents the case of resource heterogeneity plus non-IID data heterogeneity with equal data quantities per client and the results are similar to resource heterogeneity only since non-IID data with the same amount of data quantity per client results in a similar effect of resource heterogeneity in terms of training time. However, the accuracy degrades slightly more here as because of the non-IID-ness the features are skewed, which results in more training bias among different classes.

Fig.~\ref{fig:cifar10_nonIID_resource_hetro} column 2 shows the case of resource heterogeneity plus both the data quantity heterogeneity and non-IID heterogeneity. 
As expected, the training time shown in Fig.~\ref{fig:cifar10_nonIID_resource_hetro} (b) is similar to Fig.~\ref{fig:cifar10_nonIID_resource_hetro} (a) since the training time impact from different data amounts can be corrected by \proj.
However, the behaviors of round accuracy are quite different here as shown in Fig.~\ref{fig:cifar10_nonIID_resource_hetro} (d). The accuracy of \textit{fast} has degraded a lot more due to the data quantity heterogeneity as it further amplifies the training class bias (i.e., the data of some classes become very little to none) in the already very biased data distribution caused by the non-IID heterogeneity. 
Similar reasons can explain for other policies
The best performing policy in accuracy here is the \textit{uniform} case and is almost the same as \textit{vanilla}, thanks to the even selection nature which results in little increase in training class bias.
 Fig.~\ref{fig:cifar10_nonIID_resource_hetro} (f) shows the wall-clock time accuracy. As expected, the significantly improved per round time in \proj shows its advantage here as within the same time budget, more iterations can be done with shorter round time and thus remedies the accuracy disadvantage per round.
\textit{fast} still falls short than \textit{vanilla} in the long run as the limited and biased data limits the benefits of more iterations. 
\textit{fast} also perform worse than \textit{vanilla} as it has no training advantage.

\subsubsection{Adaptive Selection Policy}\label{dynamic_evl}
The above evaluation demonstrate the naive selection approach in \proj can significantly improve the training time, but sometimes can fall short in accuracy, especially when strong data heterogeneity presents as such approach is data-heterogeneity agnostic.   
In this section, we evaluate the proposed \textit{adaptive} tier selection approach of \proj, which takes into consideration of both resource and data heterogeneity when making scheduling decisions without privacy violation.
We compare \textit{adaptive} with \textit{vanilla} and \textit{uniform}, and the later is the best accuracy performing static policy.

Fig.~\ref{fig:cifar10_noniid_dynamic} shows \textit{adaptive} outperforms \textit{vanilla} and \textit{uniform} in both training time and accuracy for resource heterogeneity with data quantity heterogeneity (Amount) and non-IID heterogeneity (Class), thanks to the data heterogeneity-aware schemes.
In the combined resource and data heterogeneity case (Combine), \textit{adaptive} achieves comparable accuracy with \textit{vanilla} with almost half of the training time, and performs similar as \textit{uniform} in training time while improves significantly in accuracy.
The above robust performance of \textit{adaptive} is credited to both the resource and data heterogeneity-aware schemes.
To demonstrate the robustness of \textit{adaptive}, we compare the accuracy over rounds for different policies under different non-IID heterogeneity in Fig. \ref{fig:cifar10_dynamic_data_heterogeneity_accuracy}.
It is clear that \textit{adaptive} consistently outperforms \textit{vanilla} and \textit{uniform} in different level of non-IID heterogeneity.

\subsubsection{Adaptive Selection Policy(LEAF)}

This section provides the evaluation of \proj using a widely adopted large scale distributed FL dataset FEMINIST from the  LEAF framework \cite{caldas2018leaf} . We use exactly the same configurations (data distribution, total number of clients, model and training hyperparameters) as mentioned in \cite{caldas2018leaf} resulting in total number of 182 clients, i.e. deploy-able edge devices. Since LEAF provides it's own data distribution among devices the addition of resource heterogeneity results in a range of training times thus generating a scenario where every edge device has a different training latency. We further incorporated \proj's tiering module and selection policy to the extended LEAF framework. The profiling modules collects the training latency of each clients and creates a logical pool of tiers which is further utilized by the scheduler. The scheduler selects a tier and then the edge clients within the tier in each training round. For our experiments with LEAF we limit the total number of tiers to 5 and during each round we select 10 clients, with 1 local epoch per round.

Figure \ref{fig:leaf_noniid_dynamic} shows the training time and accuracy over rounds for LEAF with different client selection policies. Figure \ref{subfig:leaf_noniid_dynamic_overall_training_time} shows the training time for different selection policies. The least training time is achieved by using the \textit{fast} selection policy  however, it impact the final model accuracy by almost 10\% compared to \textit{vanilla} selection policy. The reason for the least accuracy for \textit{fast} is the result of less training point among the clients in tier 1. One interesting observation is \textit{slow} out performs the selection policy \textit{fast} in terms of accuracy even though each of these selection policies rely on data from only one tier. It must be noted that the slow tier is not only the reason of less computing resources but also the higher quantity of training data points. These results are consistent with our observations from the results presented in Section \ref{data_heterogeneity}.

Figure \ref{subfig:leaf_noniid_dynamic_overall_accuracy} shows the accuracy over-rounds for different selection policies. Our proposed \textit{adaptive} selection policy achieves 82.1\% accuracy and outperforms the \textit{slow} and \textit{fast} selection policies by 7\% and 10\% respectively. The \textit{adaptive} policy is on par with the \textit{vanilla} and \textit{uniform} ( 82.4\% and 82.6\% respectively). when comparing the total training time for 2000 rounds \textit{adaptive} achieves 7 $\times$ and 2 $\times$ improvement compare to \textit{vani} and \textit{uniform} respectively. \textit{fast} and \textit{random} both outperformed the \textit{adaptive} in terms of training time however, even after convergence the accuracy for both of these selection policies show a noticeable impact on the final model accuracy. The results for FEMINIST using the extended LEAF framework for both accuracy as well as training time are also consistent with the results reported in Section \ref{dynamic_evl}.

\section{Conclusion}
\label{sec:conclusion}
In this paper, we investigate and quantify the heterogeneity impact on ``decentralized virtual supercomputer'' - FL systems. Based on the observations of our case study, we propose and prototype a Tier-based Federated Learning System called \proj.
Tackling the resource and data heterogeneity, {\proj} employs a tier-based approach that groups clients in tiers by their training response latencies and selects clients from the same tier in each training round. 
To address the challenge that data heterogeneity information cannot be directly measured due to the privacy constraints, we further design an \textit{adaptive} tier selection approach that enables \proj be data heterogeneity aware and outperform conventional FL in various heterogeneous scenarios: \textit{resource heterogeneity}, \textit{data quantity heterogeneity}, \textit{non-IID data heterogeneity}, and their combinations. Specifically, \proj achieves an improvement over conventional FL by up to 3$\times$ speedup in overall training time and by 6\% in accuracy.


\end{document}